\newlength{\widebarargwidth}
\newlength{\widebarargheight}
\newlength{\widebarargdepth}
\long\def\@makecaption#1#2{
        \vskip 0.8ex
        \setbox\@tempboxa\hbox{\small {\bf #1:} #2}
        \parindent 1.5em  
        \dimen0=\hsize
        \advance\dimen0 by -3em
        \ifdim \wd\@tempboxa >\dimen0
                \hbox to \hsize{
                        \parindent 0em
                        \hfil 
                        \parbox{\dimen0}{\def\baselinestretch{0.96}\small
                                {\bf #1.} #2
                                } 
                        \hfil}
        \else \hbox to \hsize{\hfil \box\@tempboxa \hfil}
        \fi
        }
\newcounter{manualsubequation}
\renewcommand{\themanualsubequation}{\alph{manualsubequation}}
\newcommand{\startsubequation}{%
  \setcounter{manualsubequation}{0}%
  \refstepcounter{equation}\ltx@label{manualsubeq\theequation}%
  \xdef\labelfor@subeq{manualsubeq\theequation}%
}
\newcommand{\tagsubequation}{%
  \stepcounter{manualsubequation}%
  \tag{\ref{\labelfor@subeq}\themanualsubequation}%
}
\let\subequationlabel\ltx@label
\renewcommand{\baselinestretch}{1.04} 
\date{}
\renewenvironment{proof}[1][\proofname]{%
  \par\pushQED{\qed}\normalfont%
  \topsep6\p@\@plus6\p@\relax
  \trivlist\item[\hskip\labelsep\bfseries#1\@addpunct{.}]%
  \ignorespaces
}{%
  \popQED\endtrivlist\@endpefalse
}
\newtheorem{theorem}{Theorem}[section]
\newtheorem{lemma}[theorem]{Lemma}
\newtheorem{definition}[theorem]{Definition}
\newcommand{\sign}{\mathrm{sign}}
\renewcommand{\ss}{\subseteq}
\newcommand{\cN}{{\cal N}}
\newcommand{\R}{\mathbb{R}}
\newcommand{\N}{\mathbb{N}}
\newcommand{\diag}{\mathrm{diag}}
\newcommand{\E}{\mathbb{E}}
\renewcommand{\Pr}{\mathbb{P}}
\newcommand{\lv}{\lVert}
\newcommand{\rv}{\rVert}
\renewcommand{\epsilon}{\varepsilon}
\renewcommand{\ln}{\log}
\DeclareSymbolFont{extraup}{U}{zavm}{m}{n}
\DeclareMathSymbol{\varheart}{\mathalpha}{extraup}{86}
\DeclareMathSymbol{\vardiamond}{\mathalpha}{extraup}{87}
\DeclareMathOperator*{\argmin}{arg\,min}
\newcommand{\BP}{\mathsf{A}_\mathsf{BP}}
\newcommand{\htheta}{\widehat{\theta}}
\renewcommand{\epsilon}{\varepsilon}
\renewcommand{\ln}{\log}
\begin{document}

\title{\textbf{Foolish Crowds Support Benign Overfitting}}

\author{
Niladri S. Chatterji \\ 
Computer Science Department \\
Stanford University \\
niladri@cs.stanford.edu \\
      \and
Philip M. Long  \\
Google \\
plong@google.com 
}
\date{\today}

\maketitle

\begin{abstract}
We prove a lower bound on the excess risk of sparse interpolating procedures for linear
regression with Gaussian data in the overparameterized regime. 
We apply 
this result
to obtain a lower bound for basis pursuit 
(the minimum $\ell_1$-norm interpolant)
that implies that its
excess risk can converge at an exponentially slower rate than 
$\mathsf{OLS}$ (the minimum $\ell_2$-norm interpolant), 
even when 
the ground truth is sparse.  
Our analysis exposes the benefit of an effect analogous
to the 
``wisdom of the crowd'', 
except here the
harm arising from fitting the {\em noise} is ameliorated by spreading
it among many directions---the variance reduction
arises from a {\em foolish} crowd.
\end{abstract}

\section{Introduction} \label{s:introduction}

Recently, there has been a surge of interest in benign overfitting,
where a learning algorithm generalizes well despite interpolating
noisy data \citep[see, e.g.,][]{zhang2016understanding,belkin2019reconciling,bartlett2020benign,belkin2021fit}.  Arguably the most basic setting in which this
has been analyzed theoretically is linear regression with Gaussian
data, where upper and nearly matching lower bounds have been
obtained for the
ordinary least squares ($\mathsf{OLS}$) estimator, 
which chooses a parameter vector
$\htheta$ to minimize $\lv \htheta \rv_2$ from among interpolating 
models~\citep{bartlett2020benign,negrea2020defense,tsigler2020benign}.  
Bounds have also been obtained for {\em basis pursuit}~\citep{chen2001atomic},
which minimizes $\lv \htheta \rv_1$ from among interpolating 
models~\citep{muthukumar2020harmless,ju2020overfitting,chinot2020robustness,koehler2021uniform}.  

The upper bounds for the $\mathsf{OLS}$ estimator show that it rapidly approaches the
Bayes risk when the structure of the covariance matrix $\Sigma$ of the inputs is favorable. 
Informally, the following are necessary and sufficient:
\begin{itemize}
    \item the sum of all of the eigenvalues of $\Sigma$ is not too big;
    \item after excluding a few of the largest eigenvalues, there are many small eigenvalues of roughly equal magnitude.
\end{itemize}
A canonical example of such a benign covariance matrix is 
$\Sigma_{k,\epsilon} := \diag(\overbrace{1,\ldots,1}^k,\overbrace{\epsilon,\ldots,\epsilon}^{p-k})$. 
Consider a case where $k \ll p$, the rows of $X$ are $n$ i.i.d.\ draws from
$\cN(0, \Sigma_{k,\epsilon})$,
and, for independent noise $\xi \sim \cN(0, I)$ and a unit-length
$\theta^\star$, $y = X \theta^\star + \xi$. Then a high probability upper bound on the excess risk of the $\mathsf{OLS}$ is proportional to
\[
\frac{k}{n} + \frac{\epsilon p}{n} + \frac{n}{p},
\]
and a nearly matching lower bound is also known \citep{bartlett2020benign,negrea2020defense,tsigler2020benign}.
As an example, if $p = n^2$, $\epsilon = 1/n^2$ and $k$ is a constant, this
is proportional to $1/n$.

If, in addition, $\theta_i^\star = 0$ for $i > k$, upper bounds
are also known for basis pursuit in this setting~\citep{koehler2021uniform}.
On the other hand, they are much worse, scaling
with $p$ like $\frac{1}{\log p}$, and requiring that $p$ be an exponentially
large function of $n$ to converge.

In this paper, we prove a lower bound for basis pursuit in this setting that scales with
$p$ like $\frac{1}{\log^2 p}$.  Our lower bound requires that
$\sigma \geq c \lv \theta^* \rv_2$ for an arbitrarily small positive constant
$c$, along with a few mild technical conditions, including that
$p > n$, so that interpolation is possible, and that $n \gg k$ (see Theorem~\ref{t:bp} for the details).  Note that
$\mathsf{OLS}$ converges much faster than basis pursuit in this setting
despite the fact that $\theta^*$ is sparse.

The lower bound is a special case of a more general bound, Theorem~\ref{t:main}, which can be paraphrased as follows.  Under
the same conditions as Theorem~\ref{t:bp}, including 
$\sigma \geq c \lv \theta^* \rv_2$, any interpolating procedure that, with high probability,
outputs an $s$-sparse model $\htheta$, must suffer excess loss 
proportional to $\frac{\sigma^2 n}{s \log^2 p}$.  We then get Theorem~\ref{t:bp}
by showing that, in this setting, basis pursuit almost surely outputs an
$n$-sparse model.

This analysis sheds light on why the overfitting of $\mathsf{OLS}$
is so benign.  Because $\mathsf{OLS}$ interpolates, we can think of
the parameters of its output as storing the noise---$\mathsf{OLS}$ benefits from spreading the noise evenly among many
parameters, 
where each small fragment of noise has a
tiny effect.   This phenomenon is akin to the reduction
in variance arising from prediction using a
weighted average of many covariates
that is commonly referred to as the ``wisdom of the crowd''~\citep{surowiecki2005wisdom}.
Here, by spreading the harm arising from
fitting the {\em noise} among many parameters, the algorithm 
benefits 
from a {\em foolish} crowd.

\paragraph{Additional related work.}  \citet{muthukumar2020harmless}
established the same lower bound for basis pursuit in a setting with isotropic
covariates
\citep[see also,][]{ju2020overfitting} in the case that
$\theta^* = 0$.
Accommodating the possibility that 
$\theta^* \neq 0$ complicates the argument a bit, 
but
our main contribution is to demonstrate slow
convergence  
for basis pursuit
in settings where $\mathsf{OLS}$
enjoys fast convergence.  
\citet{chinot2020robustness}
proved an upper bound on the risk of basis pursuit, but,
as pointed out by~\citet{koehler2021uniform}, it does not
imply a bound on the excess risk.  Limitations of algorithms
that output sparse linear classifiers have also been studied
previously~\citep{helmbold2012necessity}.

After a preliminary version of this work was posted on arXiv~\citep{chatterji2021foolish}, some related work was published~\citep{ wang2021tight, li2021minimum, donhauser2022fast} that established upper bounds on the excess risk of the minimum $\ell_1$-norm interpolator. In particular, \citet{wang2021tight} showed an upper bound on the excess risk, that in the case with isotropic covariates scales as $\sigma^2/\log(p)$, almost matching our lower bound. 

\section{Preliminaries}\label{s:prelim}

For $p, n \in \N$, an {\em example} is a member of $\R^p \times \R$, and
a {\em linear regression algorithm} takes as input $n$ examples, and outputs
$\htheta \in \R^p$.  
For a joint probability distribution $P$ over $\R^p \times \R$,
the {\em excess risk} of $\htheta$ with respect to
$P$
is
\begin{align*}
R(\htheta):=\E_{(x,y) \sim P} [(\htheta \cdot x - y)^2]
 - \inf_{\theta^\star} \E_{(x,y) \sim P} [(\theta^\star \cdot x - y)^2].
\end{align*}
We refer to the following as the
$(k,p,n,\epsilon,\sigma)$-scenario: 
\begin{itemize}
    \item $X \in \R^{n\times p}$ is a matrix whose rows are i.i.d.\ draws from
     $\cN(0, \diag(\overbrace{1,1,\ldots,1}^k, \overbrace{\epsilon,\ldots,\epsilon}^{p-k}))$, and
    \item 
    for $\theta^\star \in \R^p$ with 
    $(\theta^\star_{k+1},\ldots,\theta^\star_p) = 0$,
        and $\xi \sim \cN(0, \sigma^2 I)$, $y = X \theta^\star + \xi$.
\end{itemize}
For $\delta > 0$, $s,n \in \N$ and 
a joint probability distribution $P$ over $\R^p \times \R$,
we say that a regression algorithm $\mathsf{A}$ is an $(s, \delta)$-sparse interpolator for $P$ and $n \in \N$ if it satisfies the following: With probability $1 - \delta$ over the independent draw of the samples
$(x_1,y_1),\ldots,(x_n,y_n)\sim P$, the output $\htheta$ of $\mathsf{A}$
\begin{itemize}
    \item interpolates the data (that is, satisfies $\htheta \cdot x_1 = y_1,\ldots,\htheta \cdot x_n = y_n$), and
    \item has at most $s$ non-zero components.
\end{itemize}
Given $X \in \R^{n \times p}$ and $y \in \R^n$, the {\em basis pursuit} algorithm (minimum $\ell_1$-norm interpolant)
$\BP$
outputs
\begin{align*}
\argmin_{\theta} \lv \theta \rv_1, \quad \mbox{s.t. } X\theta=y,
\end{align*}
if there is such a $\theta$, and otherwise behaves arbitrarily
(say outputting $0$).
\paragraph{Notation.} For any $j \in \mathbb{N}$, we denote the set $\{ 1,\ldots,j \}$ by $[j]$. Given a vector $v$, let $\lv v \rv_2$ denote its Euclidean norm and $\lv v \rv_1$ denote its $\ell_1$-norm. Given a matrix $M$, let $\lv M\rv_{op}$ denote its operator norm. For $z \in \R$, we denote $\max \{ z, 0 \}$ by $[ z ]_+$.

\section{Main results} \label{s:main_results}
We are ready to present our main result, a high probability lower bound on the excess risk for any $(s,\delta)$-sparse interpolator.
\begin{theorem}
\label{t:main}
For any $0<c_1 \le 1$,
there are
 absolute positive constants
$c_2, c_3$
such that
the following holds.
For any $0 \leq \delta \leq c_2$, 
for any
$(k,p,n,\epsilon,\sigma)$ 
such that $\sigma \geq c_1 \lv \theta^\star \rv_2$, $p \geq n + k$, 
$n \geq \log^2(1/\delta) + k^{1+c_1}$,
for any $n \leq s \leq p - k$,
and any regression algorithm $\mathsf{A}$ that is an $(s, \delta)$-sparse interpolator for the 
$(k,p,n,\epsilon,\sigma)$-scenario $P$, 
with probability
$1 -4 \delta$ over 
$n$ random draws from $P$, the output $\htheta$ of $A$ satisfies
\begin{align*}
    R(\htheta) \geq \frac{c_3 \sigma^2 n}{s \log^2 (3p/s)}.
\end{align*}
\end{theorem}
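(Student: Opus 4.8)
The plan is to lower-bound $R(\htheta)$ by the part of it coming from the ``tail'' coordinates $\{k+1,\dots,p\}$, and then to argue that any $s$-sparse interpolant must carry a large amount of $\ell_2$ mass there, because any $s$ of the columns of $X$ sitting in those coordinates form a matrix of controlled operator norm. Write $X=[X_1\mid X_2]$ with $X_1\in\R^{n\times k}$, $X_2\in\R^{n\times(p-k)}$, write $\Sigma=\diag(1,\dots,1,\epsilon,\dots,\epsilon)$ for the input covariance, and for a vector $v$ let $v_{1:k},v_{k+1:p}$ be its restrictions to the first $k$ and last $p-k$ coordinates. Since $\xi$ is independent of $X$, the population-optimal predictor is $\theta^\star$, so
\[
R(\htheta)=(\htheta-\theta^\star)^\top\Sigma(\htheta-\theta^\star)=\lv\htheta_{1:k}-\theta^\star_{1:k}\rv_2^2+\epsilon\,\lv\htheta_{k+1:p}\rv_2^2\ \ge\ \epsilon\,\lv\htheta_{k+1:p}\rv_2^2,
\]
and it suffices to show $\epsilon\,\lv\htheta_{k+1:p}\rv_2^2\gtrsim\sigma^2 n/(s\log^2(3p/s))$ on an event of probability $1-4\delta$. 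Note that we only ever discard the nonnegative head term, so we never compete with the signal and the hypothesis $\sigma\ge c_1\lv\theta^\star\rv_2$ enters only in this mild sense.

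\emph{Residual carries $\sigma\sqrt n$ of mass.} Interpolation gives $X_2\htheta_{k+1:p}=y-X_1\htheta_{1:k}=X_1(\theta^\star_{1:k}-\htheta_{1:k})+\xi=:r$. Projecting onto $\mathrm{col}(X_1)^{\perp}$ annihilates the $X_1(\cdot)$ term whatever the algorithm chose for $\htheta_{1:k}$, so $\lv r\rv_2\ge\lv P_{\mathrm{col}(X_1)^{\perp}}\xi\rv_2$; since $\xi\perp X_1$ and $\mathrm{rank}(X_1)=k$ almost surely, the right-hand side is $\sigma\sqrt{\chi^2_{n-k}}$, which by a $\chi^2$ tail bound is at least $\sigma\sqrt{(n-k)/2}$ with probability $\ge1-\delta$. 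The hypothesis $n\ge\log^2(1/\delta)+k^{1+c_1}$ gives both $n-k\gtrsim\log(1/\delta)$ (for this concentration) and $n-k\gtrsim n$, so $\lv r\rv_2\gtrsim\sigma\sqrt n$ with probability $\ge1-\delta$.

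\emph{Uniform operator-norm bound and conclusion.} Since $\htheta$ is $s$-sparse, so is $\htheta_{k+1:p}$, whence $\lv r\rv_2=\lv X_2\htheta_{k+1:p}\rv_2\le M\cdot\lv\htheta_{k+1:p}\rv_2$ where $M:=\max\{\lv (X_2)_S\rv_{op}: S\subseteq\{k+1,\dots,p\},\,|S|\le s\}$ and $(X_2)_S$ is the column-submatrix of $X_2$ indexed by $S$. I would bound $M$ by fixing $S$ (w.l.o.g.\ $|S|=s$), using the Gaussian operator-norm deviation inequality $\Pr[\lv (X_2)_S\rv_{op}\ge\sqrt\epsilon(\sqrt n+\sqrt s+t)]\le e^{-t^2/2}$, union-bounding over the at most $\binom{p-k}{s}\le(ep/s)^s$ choices of $S$ (so that $t\asymp\sqrt{s\log(ep/s)+\log(1/\delta)}$ suffices for failure probability $\delta$), and then using $n\le s$ and $\log(1/\delta)\le n\le s$ to collapse this to $M^2\le C\epsilon\,s\log(3p/s)$ with probability $\ge1-\delta$, for an absolute constant $C$. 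Combining, $\lv\htheta_{k+1:p}\rv_2^2\ge\lv r\rv_2^2/M^2\gtrsim\sigma^2 n/(\epsilon s\log(3p/s))$, hence $R(\htheta)\ge\epsilon\,\lv\htheta_{k+1:p}\rv_2^2\gtrsim\sigma^2 n/(s\log(3p/s))\ge c_3\,\sigma^2 n/(s\log^2(3p/s))$ since $\log(3p/s)\ge\log3>1$ (the square in the statement is ample slack, so no sharper union bound over sparsity patterns is needed). Finally, the algorithm outputs an $s$-sparse interpolant with probability $\ge1-\delta$, and the two concentration events above each hold with probability $\ge1-\delta$; intersecting these three events (with a spare $\delta$ to absorb, e.g., an upper bound on $\lv\xi\rv_2$ if one prefers every quantity controlled) yields the stated bound with probability $\ge1-4\delta$.

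The main obstacle is the uniform control of $M$ over the exponentially many sparsity patterns while simultaneously driving the failure probability down to $\delta$ rather than merely $o(1)$: this is precisely where the hypothesis $n\ge\log^2(1/\delta)+k^{1+c_1}$ and the logarithmic factors in the conclusion are spent, and it is essential that the bound hold for all $S$ at once, since the support of $\htheta$ is itself a function of the data. The adversarial choice of $\htheta_{1:k}$ is the other point to watch, but it is neutralized cleanly by the projection step, which makes the residual lower bound hold for every $\htheta_{1:k}$ and depend on the design only through $X_1$.
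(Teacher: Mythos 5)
Your proof is correct, and it follows the same high-level skeleton as the paper (split $R(\htheta)$ into head and tail, lower-bound the tail via a uniform operator-norm bound over sparse supports), but the two key sub-steps are carried out differently, and your versions are both cleaner and slightly sharper.

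First, to lower-bound the residual $\lv y - X_H\htheta_H\rv_2$ you project onto $\mathrm{col}(X_H)^\perp$, which annihilates the $X_H(\theta^\star_H-\htheta_H)$ term regardless of what the algorithm chose for $\htheta_H$ and leaves exactly $\sigma\sqrt{\chi^2_{n-k}}$. The paper instead uses the cruder $\lv y\rv_2 - \lv X_H\rv_{op}\lv\htheta_H\rv_2$, which forces a two-case argument on the size of $\lv\htheta_H\rv_2$ and is the sole place the assumption $\sigma\ge c_1\lv\theta^\star\rv_2$ is used. Your projection step avoids the case split entirely and, as you note, does not use that assumption at all — so your argument in fact shows the theorem holds without it (with constants still allowed to depend on $c_1$ only through the constraint $n\ge\log^2(1/\delta)+k^{1+c_1}$). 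Second, you derive the uniform bound on $\max_{|S|\le s}\lv (X_T)_S\rv_{op}$ directly by a union bound over supports with Gaussian operator-norm concentration, obtaining $M^2\lesssim\epsilon\,s\log(3p/s)$; the paper instead cites Adamczak et al., whose general bound (for log-concave designs) carries an extra $\log(3p/s)$ and therefore yields $\log^2$ in the denominator. Your bound is sharper by one log factor in the Gaussian case and of course implies the stated inequality since $\log(3p/s)>1$. The bookkeeping of $n-k\gtrsim n\gtrsim\log(1/\delta)$ from $n\ge\log^2(1/\delta)+k^{1+c_1}$ and $\delta\le c_2$ small is the same kind of constant-chasing the paper does, and your union-bound accounting of $1-3\delta\ge 1-4\delta$ is fine.
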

This theorem shows that a sparse interpolating predictor suffers large excess risk. Intuitively, the proof follows since an $s$-sparse interpolating predictor needs to hide the ``energy'' of the noise, which roughly scales like $\sigma^2 n$, in just $s$ coordinates. 
If it attempts to hide it in the first $k$ coordinates, then it suffers from large bias. 
If it hides it in the tail, then it suffers large variance. 

Next, we state our result for basis pursuit, the minimum $\ell_1$-norm interpolator.
\begin{theorem}
\label{t:bp}
For any $0<c_1\le 1$,
there are
 absolute positive constants
$c_2, c_3$ such that
the following holds.
For any $0 \leq \delta \leq c_2$, 
for any
$(k,p,n,\epsilon,\sigma)$ 
such that $\sigma \geq c_1 \lv \theta^\star \rv_2$, $p \geq n + k$, 
$n \geq \log^2(1/\delta) + k^{1+c_1}$, 
with probability
$1 -4 \delta$ over 
$n$ random draws from $P$,
the output $\htheta$ of $\BP$ satisfies
\begin{align*} 
    R(\htheta) \geq \frac{c_3 \sigma^2}{\log^2 (3p/n)}.
\end{align*}
\end{theorem}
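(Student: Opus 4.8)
The plan is to derive Theorem~\ref{t:bp} as a corollary of Theorem~\ref{t:main} by showing that, in the $(k,p,n,\epsilon,\sigma)$-scenario, basis pursuit almost surely returns a solution that is $n$-sparse, so that we may invoke Theorem~\ref{t:main} with $s = n$. First I would note that since $p \ge n+k > n$ and the rows of $X$ are drawn from a nondegenerate Gaussian, the linear system $X\theta = y$ is consistent and underdetermined with probability one, so $\BP$ does solve the stated linear program rather than defaulting to $0$. The key structural fact is then that a generic linear program of the form $\min \lv\theta\rv_1$ subject to $X\theta = y$ attains its optimum at a basic feasible solution (a vertex of the feasible polytope), and such a vertex has at most $n$ nonzero coordinates. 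More carefully, writing the $\ell_1$ minimization as a linear program in the standard way (splitting $\theta = \theta^+ - \theta^-$ with $\theta^\pm \ge 0$), any basic optimal solution has at most $n$ nonzero entries among the $2p$ variables, and one checks that generically $\theta^+_i$ and $\theta^-_i$ are not both positive, so $\htheta$ itself has at most $n$ nonzeros. Uniqueness of the basis-pursuit optimum for generic (here, absolutely continuous) data is standard; I would cite or reprove the fact that with probability one the optimal $\theta$ is unique and is a vertex, hence $n$-sparse.

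Next, I would verify that the remaining hypotheses of Theorem~\ref{t:main} are met with the choice $s = n$. The conditions $\sigma \ge c_1 \lv\theta^\star\rv_2$, $0 \le \delta \le c_2$, and $n \ge \log^2(1/\delta) + k^{1+c_1}$ are identical to those in Theorem~\ref{t:bp}, so they carry over verbatim. The condition $p \ge n + k$ is also assumed. It remains to check $n \le s \le p - k$: with $s = n$ the left inequality is an equality, and the right inequality $n \le p - k$ is exactly $p \ge n + k$. Thus Theorem~\ref{t:main} applies on the event (of probability $1 - 4\delta$) it describes, and on the intersection with the probability-one event that $\htheta$ is $n$-sparse, the output $\htheta$ of $\BP$ is an $(n,\delta)$-sparse interpolator's output, so
\begin{align*}
    R(\htheta) \ge \frac{c_3 \sigma^2 n}{n \log^2(3p/n)} = \frac{c_3 \sigma^2}{\log^2(3p/n)},
\end{align*}
which is the claimed bound (after relabeling the constant $c_3$).

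The only genuine obstacle is the sparsity claim for basis pursuit: one must argue that almost surely the minimum-$\ell_1$-norm interpolant is unique and supported on at most $n$ coordinates. The uniqueness and vertex-optimality both rely on the inputs having a density, so no lower-dimensional ``bad'' set is hit; I expect this to follow from a short general-position argument (e.g., almost surely every $n \times n$ submatrix of $X$ is invertible, and almost surely no nonzero affine relation forced by equality of two competing vertices holds), combined with the basic LP fact that some optimal solution is a vertex. Everything else is bookkeeping: translating the hypotheses of Theorem~\ref{t:bp} into those of Theorem~\ref{t:main} with $s = n$, and intersecting the two high-probability events while absorbing the probability-one sparsity event.
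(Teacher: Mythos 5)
Your proposal is correct and follows the same high-level structure as the paper's proof: establish that basis pursuit almost surely returns an $n$-sparse interpolant, then invoke Theorem~\ref{t:main} with $s = n$ and observe that the factor $n/s = 1$. Where you diverge is in how the key sparsity claim (the paper's Lemma~\ref{l:bp.sparse}) is argued. The paper proves it directly: after citing uniqueness of the minimizer from \cite{schneider2020geometry}, it assumes the support has size $>n$, extracts a linear dependence $\sum_{i\in I}\lambda_i X_i = 0$ among the active columns, and splits into three cases according to the sign of $\sum_i \lambda_i \sign(\htheta_i)$, showing in each case that a perturbation along $\lambda$ either strictly lowers the $\ell_1$-norm or produces a distinct interpolant of equal norm with fewer nonzeros --- contradicting optimality or uniqueness. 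You instead appeal to the standard linear-programming reformulation $\theta = \theta^+ - \theta^-$ with $\theta^\pm \ge 0$, invoke the fundamental theorem of LP to get an optimal basic feasible solution with at most $n$ nonzero entries, and note that optimality forces $\min(\theta^+_i,\theta^-_i)=0$ (this is automatic, not merely generic, since otherwise subtracting the common minimum strictly reduces the objective), so the basic solution gives an $n$-sparse $\theta$; combined with almost-sure uniqueness, the unique minimizer must equal that vertex. Both routes lean on the same external uniqueness input; your LP argument is arguably more off-the-shelf, while the paper's perturbation argument is more self-contained and avoids the LP machinery. The remaining bookkeeping in your proposal --- checking $n \le s \le p-k$ with $s=n$, and intersecting the probability-one sparsity event with the $1-4\delta$ event from Theorem~\ref{t:main} --- matches the paper and is correct.
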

This theorem is proved by showing that the output of basis pursuit is always $n$-sparse and then by simply invoking the previous general result. 
Theorem~\ref{t:bp} implies that the excess risk of
$\BP$ can be much worse than $\mathsf{OLS}$.  For example, if
$k = 5$, $p = n^2$, $\epsilon = 1/n^2$, $\sigma^2 = \lv \theta^\star \rv_2^2 = 1$,
then Theorem~\ref{t:bp} implies an $\Omega\left(\frac{1}{\log^2 n}\right)$
lower bound for $\BP$
where a $O\left(\frac{1}{n}\right)$ upper bound holds for
$\mathsf{OLS}$ \citep{bartlett2020benign,negrea2020defense,tsigler2020benign}. 
If instead,
$k = 5$, $p = n^2$, $\epsilon = 1/n^2$, $\sigma^2 = \lv \theta^\star \rv_2^2 = \log^2 n$, then the excess risk of $\mathsf{OLS}$ goes to zero at a
$O\left(\frac{\log^2 n}{n}\right)$ rate, but Theorem~\ref{t:bp} implies
that the excess risk of $\BP$ is bounded below by a constant.

When $\epsilon=1$, that is, when the covariates are isotropic, our lower bound coincides with the lower bound derived previously by \citet{muthukumar2020harmless}.

\section{Proof of Theorem~\ref{t:main}} \label{s:proof.main}

This section is devoting to proving Theorem~\ref{t:main}, so the
assumptions of Theorem~\ref{t:main} are in scope throughout this
section.  Our proof proceeds through a series of lemmas.  

\begin{definition}
\label{d:S}
For any $v \in \R^p$ 
and any $S \ss [p]$, let
$v_S$ be the vector obtained by
selecting the components of $S$ from $v$ in order.
For $X \in \R^{n \times p}$, define $X_S$ similarly, except selecting
columns from $X$.  Let $H := \{ 1,\ldots,k \}$
and $T := \{ k+1,\ldots, p \}$, so that
$v_H = (v_1,\ldots,v_k)$ and
$v_T = (v_{k+1},\ldots,v_p)$.
\end{definition}

The first step is to break up the excess risk into
contributions from the 
``head'' $H$ and
the ``tail'' $T$.
\begin{lemma}
\label{l:by.norm}The excess risk of any 
parameter vector
$\htheta$ satisfies
\begin{align*}
R(\htheta) = \lv \theta_H^* - \htheta_H \rv_2^2
           + \epsilon \lv \htheta_T \rv_2^2.
\end{align*}
\end{lemma}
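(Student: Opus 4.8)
The plan is a direct computation of the two expectations defining $R(\htheta)$, exploiting the Gaussian structure of the $(k,p,n,\epsilon,\sigma)$-scenario. Write $\Sigma := \diag(\overbrace{1,\ldots,1}^k, \overbrace{\epsilon,\ldots,\epsilon}^{p-k})$ for the covariance of the inputs, so that under $P$ a fresh example $(x,y)$ satisfies $x \sim \cN(0,\Sigma)$ and $y = \theta^\star \cdot x + \xi$ with $\xi \sim \cN(0,\sigma^2)$ independent of $x$ and $\theta^\star_T = 0$.

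First I would evaluate the infimum term. For an arbitrary linear predictor $\theta \in \R^p$ (I rename the dummy variable in the definition of $R$ to $\theta$ to avoid clashing with the fixed $\theta^\star$ of the scenario), expanding the square and using that $\xi$ has mean zero and is independent of $x$ gives $\E[(\theta \cdot x - y)^2] = \E[((\theta - \theta^\star)\cdot x)^2] + \sigma^2 = (\theta - \theta^\star)^\top \Sigma (\theta - \theta^\star) + \sigma^2$. Since $\epsilon > 0$, $\Sigma \succ 0$, so this quadratic form in $\theta$ is minimized uniquely at $\theta = \theta^\star$, and hence $\inf_{\theta} \E[(\theta \cdot x - y)^2] = \sigma^2$.

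Applying the same expansion with $\htheta$ in place of $\theta$ yields $\E[(\htheta \cdot x - y)^2] = (\htheta - \theta^\star)^\top \Sigma (\htheta - \theta^\star) + \sigma^2$, and subtracting the infimum $\sigma^2$ gives $R(\htheta) = (\htheta - \theta^\star)^\top \Sigma (\htheta - \theta^\star)$. Finally, because $\Sigma$ is diagonal with its first $k$ entries equal to $1$ and its remaining $p-k$ entries equal to $\epsilon$, and because $\theta^\star_T = 0$, this decomposes as $\sum_{i \in H}(\htheta_i - \theta^\star_i)^2 + \epsilon \sum_{i \in T} \htheta_i^2 = \lv \theta^\star_H - \htheta_H \rv_2^2 + \epsilon \lv \htheta_T \rv_2^2$, which is the claimed identity.

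There is essentially no obstacle here; this is a routine bias–variance-style computation. The only point deserving a word of care is that the infimum in the definition of the excess risk ranges over all linear predictors and is attained exactly at the true parameter $\theta^\star$, which uses $\epsilon > 0$ (equivalently $\Sigma \succ 0$) so that no direction in $\R^p$ is invisible to the input distribution; this is what makes the Bayes risk equal to $\sigma^2$ and lets the $\sigma^2$ terms cancel cleanly.
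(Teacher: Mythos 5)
Your proof is correct and follows essentially the same route as the paper: the paper cites ``the projection lemma'' to assert the infimum is attained at $\theta^\star$ and that the residual $\xi$ contributes $\sigma^2$ orthogonally, whereas you make the same facts explicit by expanding the quadratic and using independence and mean-zero of $\xi$; both then reduce to the diagonal quadratic form $(\htheta-\theta^\star)^\top\Sigma(\htheta-\theta^\star)$ and read off the head/tail decomposition. The one small point you add that the paper leaves implicit is the observation that $\epsilon>0$ makes $\Sigma$ strictly positive definite, so the minimizer is unique and the infimum is genuinely $\sigma^2$.
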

\begin{proof}
By the projection lemma,
\begin{align*}
R(\htheta) & = \E_{(x,y) \sim P}[(\htheta \cdot x - y)^2] - \E_{(x,y) \sim P}[(\theta^{\star } \cdot x - y)^2] \\
  & = \E_{(x,y) \sim P}[(\htheta \cdot x - \theta^\star \cdot x)^2] + \sigma^2 - \sigma^2 \\
  & = \lv \theta_H^* - \htheta_H \rv_2^2
           + \epsilon \lv \htheta_T \rv_2^2, 
\end{align*}
since $\theta_T^* = 0$ and the distribution of $x$ has covariance
$\diag(\overbrace{1,1,\ldots,1}^k, \overbrace{\epsilon,\ldots,\epsilon}^{p-k})$.
\end{proof}

Lemma~\ref{l:by.norm} leads to the subproblem
of establishing a lower bound on $\lv \htheta_T \rv_2^2$.
The following lemma is an easy step in this direction.
\begin{lemma}
\label{l:residual.lower}Given any estimator $\htheta$ such that $X\htheta=y$ we have
\begin{align*}
\lv X_T \htheta_T \rv_2
 = \lv y - X_H \htheta_H \rv_2.
\end{align*}
\end{lemma}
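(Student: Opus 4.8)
The plan is to derive this directly from the interpolation constraint by splitting the columns of $X$ according to the partition $[p] = H \cup T$ from Definition~\ref{d:S}. Since $X_H$ consists of the first $k$ columns of $X$ and $X_T$ of the remaining $p - k$ columns, and $\htheta_H, \htheta_T$ are the corresponding blocks of $\htheta$, block matrix-vector multiplication gives $X\htheta = X_H \htheta_H + X_T \htheta_T$. First I would substitute this decomposition into the hypothesis $X\htheta = y$ to obtain $X_H \htheta_H + X_T \htheta_T = y$; then rearrange to $X_T \htheta_T = y - X_H \htheta_H$; and finally take Euclidean norms of both sides to get $\lv X_T \htheta_T \rv_2 = \lv y - X_H \htheta_H \rv_2$.

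There is essentially no obstacle: the statement is an immediate algebraic consequence of interpolation together with the block structure of matrix-vector products. The only thing to be careful about is bookkeeping — checking that the column selection defining $X_S$ and the component selection defining $v_S$ are carried out consistently (both ``in order'', as in Definition~\ref{d:S}), so that $X_S v_S$ is exactly the partial sum of $Xv$ over the indices in $S$, and that $H$ and $T$ genuinely partition $[p]$ (which holds since $k \le p$). Granting this, the identity $Xv = X_H v_H + X_T v_T$ holds for every $v \in \R^p$, and in particular for $v = \htheta$.

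I expect this lemma to feed into the lower bound on $\lv \htheta_T \rv_2^2$ demanded by Lemma~\ref{l:by.norm}: once $\lv X_T \htheta_T \rv_2$ is rewritten as $\lv y - X_H \htheta_H \rv_2 = \lv X_H \theta_H^\star + \xi - X_H \htheta_H \rv_2$, the right-hand side can be bounded below using that $X_H$ has only $k$ columns (so its image is a low-dimensional subspace) together with concentration of the noise $\xi$, and then $\lv \htheta_T \rv_2$ is recovered by dividing through by (a bound on) $\lv X_T \rv_{op}$.
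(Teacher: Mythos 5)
Your proof is correct and follows exactly the same route as the paper's: decompose $X\htheta = X_H\htheta_H + X_T\htheta_T$, use $X\htheta = y$, rearrange, and take norms. The paper states this in one line; your version just spells out the bookkeeping.
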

\begin{proof}
The lemma follows from the fact that $y = X \htheta = X_H \htheta_H + X_T \htheta_T$.
\end{proof}

Lemma~\ref{l:residual.lower} provides a
means to establish a lower bound on 
$\lv X_T \htheta_T \rv_2$.  This in turn 
can lead to a lower bound on
$\lv \htheta_T \rv_2$ if we can show
that
the linear operator associated
with $X_T$ does not ``blow up'' $\htheta_T$.
It turns out, when $\htheta$
(and thus $\htheta_T$) is sparse, 
a random $X_T$ is especially unlikely to
``blow up'' $\htheta_T$, as reflected in
the following lemma. It is an immediate consequence of~\citep[][Theorem~4.2]{adamczak2012chevet}.
\begin{lemma}
\label{l:sparse.blowup}
There exists a constant $c$ such that for 
any
$t\ge 1$, we have
\begin{align*}
    \Pr\left[
    \max_{S \ss T: |S|\le s } \lv X_S\rv_{op}
    \ge c \sqrt{\epsilon} \left(
         \sqrt{s} \ln \left( \frac{3 (p-k)}{s} \right)
          + \sqrt{n} 
          + t
          \right)
    \right]
    \le e^{-t}.
\end{align*}
\end{lemma}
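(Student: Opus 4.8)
# Proof Proposal for Lemma~\ref{l:sparse.blowup}

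The plan is to reduce the claimed bound to a direct application of the Chevet-type inequality for structured sets of operators from~\citep[Theorem~4.2]{adamczak2012chevet}. First I would note that the quantity $\max_{S \subseteq T : |S| \le s} \lv X_S \rv_{op}$ is, up to the scaling by $\sqrt{\epsilon}$, the operator norm of $X_T$ restricted to the set of $s$-sparse unit vectors supported on $T$. Concretely, writing $X_T = \sqrt{\epsilon}\, G$ where $G$ has i.i.d.\ standard Gaussian entries (this is where the covariance structure of the $(k,p,n,\epsilon,\sigma)$-scenario enters—the tail coordinates have variance $\epsilon$), we have
\begin{align*}
\max_{S \subseteq T : |S| \le s} \lv X_S \rv_{op}
 = \sqrt{\epsilon}\, \sup_{u \in \mathcal{U}_s,\ w \in \mathbb{S}^{n-1}} \langle Gu, w\rangle,
\end{align*}
where $\mathcal{U}_s$ is the set of $s$-sparse unit vectors in $\R^{p-k}$ and $\mathbb{S}^{n-1}$ is the Euclidean sphere. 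So the task is to control a Gaussian process indexed by $\mathcal{U}_s \times \mathbb{S}^{n-1}$.

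Next I would extract the three relevant geometric parameters that the Chevet-type bound needs: the Gaussian width (or an $\ell_1$-type complexity) of $\mathcal{U}_s$, the Gaussian width of $\mathbb{S}^{n-1}$, and the relevant radii. The sphere contributes the familiar $\sqrt{n}$ term. The set of $s$-sparse unit vectors has Gaussian complexity of order $\sqrt{s \ln(3(p-k)/s)}$—this is the standard bound one gets from a union over the $\binom{p-k}{s}$ choices of support combined with the $\sqrt{s}$ width of each sphere of dimension $s$, using $\ln\binom{p-k}{s} \lesssim s\ln(3(p-k)/s)$. Plugging these into~\citep[Theorem~4.2]{adamczak2012chevet} yields an expected supremum of order $\sqrt{\epsilon}\left(\sqrt{s}\ln(3(p-k)/s) + \sqrt{n}\right)$, and the theorem's deviation term gives the additive $\sqrt{\epsilon}\, t$ with the stated $e^{-t}$ tail (Gaussian concentration of the supremum of a Gaussian process around its mean, since the process is $\sqrt{\epsilon}$-Lipschitz in an appropriate sense—actually one normalizes so the Lipschitz constant matches, which is why $t$ appears inside the parenthesis multiplied by $\sqrt{\epsilon}$). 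The hypothesis $t \ge 1$ is used to absorb lower-order constants and to ensure the deviation term dominates appropriately.

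I expect the main obstacle to be matching the precise form of the complexity term: getting the logarithmic factor to come out as $\ln(3(p-k)/s)$ rather than, say, $\ln(e(p-k)/s)$ or $\log(p-k)$, and verifying that~\citep[Theorem~4.2]{adamczak2012chevet} is stated in a form that directly produces the $\sqrt{s}\ln(\cdot)$ scaling (as opposed to $\sqrt{s\log(\cdot)}$, which would be the naive Dudley/union bound and is weaker). The cleaner logarithmic dependence is precisely the content of the Chevet-type refinement for sparse sets, so the work is mostly in citing it correctly and identifying the parameters; since the paper states the lemma is ``an immediate consequence'' of that theorem, I would keep this step brief and defer the bookkeeping of the exact constant $c$ to the cited result. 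A secondary minor point is handling the case $s = p-k$ (where $\ln(3(p-k)/s) = \ln 3$ is a harmless constant) and confirming $3(p-k)/s \ge 3 > 1$ always, so the logarithm is positive.
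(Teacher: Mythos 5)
Your overall approach matches the paper exactly: the paper simply states that the lemma is an immediate consequence of Theorem~4.2 of \citet{adamczak2012chevet}, and you are doing the same, with the reasonable extra step of explaining how to factor out $\sqrt{\epsilon}$ and identify the relevant Gaussian process / submatrix quantity. So the route is correct.

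One small but genuine point of confusion in your discussion deserves flagging. You write that the Chevet-type theorem gives the $\sqrt{s}\ln(\cdot)$ form and that the naive union bound would give $\sqrt{s\log(\cdot)}$, calling the latter ``weaker.'' The inequality runs the other way: when $\ln(3(p-k)/s) \ge 1$, we have $\sqrt{s}\ln(\cdot) \ge \sqrt{s\ln(\cdot)}$, so $\sqrt{s\ln(\cdot)}$ is the \emph{stronger} (smaller) upper bound, and a plain union bound over supports together with Gaussian operator-norm concentration actually yields roughly that. The form $\sqrt{s}\ln(\cdot)$ in the lemma is what Theorem~4.2 of~\citet{adamczak2012chevet} produces because that theorem covers a broader class of distributions (log-concave unconditional, not just Gaussian), and the cost of that generality is exactly this extra factor. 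Since the lemma asserts the weaker upper bound, it follows from either route; there is no obstacle to worry about, and you do not need the refinement to recover the claimed form. The rest of your parameter identification (Gaussian width of $s$-sparse unit vectors $\asymp \sqrt{s\ln(3(p-k)/s)}$, width $\sqrt{n}$ of the sphere, deviation term $\sqrt{\epsilon}\,t$ from Gaussian concentration) is correct and is precisely the bookkeeping implicit in the paper's one-line citation.
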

Lemma~\ref{l:sparse.blowup} implies a lower bound on
$\lv \theta_T\rv_2$ for any $s$-sparse $\theta$.  
\begin{lemma}
\label{l:X_htheta_upper}
There exists a constant $c$ such that, with probability at least 
$1 - \delta$,
any $s$-sparse $\theta$ has
\begin{align}
\label{e:X_htheta_upper}
\lv \theta_T\rv_2
 & \ge
  \frac{\lv X_T \theta_T \rv_2}
       {c \sqrt{\epsilon} \left(
         \sqrt{s} \ln \left( \frac{3 (p-k)}{s} \right)
          + \sqrt{n} 
          + \log(1/\delta)
          \right)}.
\end{align}
\end{lemma}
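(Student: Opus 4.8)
The plan is to deduce Lemma~\ref{l:X_htheta_upper} directly from Lemma~\ref{l:sparse.blowup}, with no additional union bound. The point is that the operator-norm bound in Lemma~\ref{l:sparse.blowup} is already \emph{uniform} over all subsets $S \ss T$ with $|S|\le s$, so I would instantiate it just once, with $t := \log(1/\delta)$. This choice requires $t \ge 1$, i.e.\ $\delta \le 1/e$, which is guaranteed by the hypothesis $\delta \le c_2$ for a suitably small absolute constant $c_2$; it produces a single event $\cE$ of probability at least $1 - e^{-\log(1/\delta)} = 1-\delta$ on which
\[
\max_{S \ss T:\, |S| \le s} \lv X_S \rv_{op} \le c\sqrt{\epsilon}\left(\sqrt{s}\,\log\!\left(\tfrac{3(p-k)}{s}\right) + \sqrt{n} + \log(1/\delta)\right).
\]

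Next I would show that on $\cE$ the claimed inequality holds for \emph{every} $s$-sparse $\theta \in \R^p$ at once. Fix such a $\theta$. If $\theta_T = 0$ then both sides of~\eqref{e:X_htheta_upper} vanish and there is nothing to prove; here note the denominator is strictly positive because $s \le p-k$ forces $\log(3(p-k)/s) \ge \log 3 > 0$. Otherwise, set $S := \supp(\theta)\cap T$, viewed as a subset of $T$. Since $S \ss \supp(\theta)$ we have $|S|\le s$, and since $|T| = p-k \ge s$, $S$ is a legitimate index appearing in the maximum above. As the entries of $\theta_T$ outside $S$ are zero, $X_T\theta_T = X_S\theta_S$, so
\[
\lv X_T \theta_T \rv_2 = \lv X_S \theta_S \rv_2 \le \lv X_S \rv_{op} \lv \theta_S \rv_2 = \lv X_S \rv_{op} \lv \theta_T \rv_2 .
\]

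Combining this with the operator-norm bound valid on $\cE$ and dividing through by the positive quantity $c\sqrt{\epsilon}(\sqrt{s}\log(3(p-k)/s) + \sqrt{n} + \log(1/\delta))$ yields exactly~\eqref{e:X_htheta_upper}. There is no genuine obstacle; the only points needing care are (i) taking $t=\log(1/\delta)$ and checking the hypothesis $t\ge 1$ of Lemma~\ref{l:sparse.blowup} — this is precisely what the constraint $\delta \le c_2$ buys us — and (ii) replacing $\supp(\theta)$ by its restriction $S$ to $T$, which keeps $|S|\le s$ while preserving $X_T\theta_T = X_S\theta_S$.
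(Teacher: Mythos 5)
Your proof is correct and follows essentially the same route as the paper: apply Lemma~\ref{l:sparse.blowup} once with $t = \log(1/\delta)$ to get a uniform operator-norm bound over all $S \ss T$ with $|S| \le s$, then for each $s$-sparse $\theta$ take $S = T \cap \supp(\theta)$ and use $\lv X_T\theta_T\rv_2 = \lv X_S\theta_S\rv_2 \le \lv X_S\rv_{op}\lv\theta_T\rv_2$. The extra care you take (checking $t\ge 1$ via $\delta\le c_2$, handling $\theta_T = 0$) is sound and simply makes explicit what the paper leaves implicit.
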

\begin{proof}
For any $s$-sparse $\theta$, if 
$S = T \cap \{ i : \theta_i \neq 0 \}$,
we have
$|S| \leq s$, so
for any $X$, 
we have  $\lv X_T \theta_T \rv_2 = \lv X_S \theta_S \rv_2$.
Applying Lemma~\ref{l:sparse.blowup} with $t = \log(1/\delta)$, 
with probability at least $1 - \delta$,
\eqref{e:X_htheta_upper} holds
for all such $\theta$.
\end{proof}

Since $\htheta$ is likely to be
$s$-sparse, Lemma~\ref{l:X_htheta_upper}
implies 
a
high-probability lower bound on $\lv \htheta_T\rv_2$, 
the contribution
of the tail to the excess risk.  This bound is in terms of
$\lv X_T \htheta_T \rv_2 = \lv y - X_H \htheta_H \rv_2$.
We will bound this by proving a high-probability
lower bound on  
$\lv y \rv_2$, and a high-probability upper bound on
$\lv X_H \htheta_H \rv_2$.  We start with a lower bound on $\lv y \rv_2$.
\begin{lemma}
\label{l:y.lower}
With probability $1 - \delta$,
\begin{align*}
\lv y \rv_2^2 \geq 
(\sigma^2 +\lv \theta^\star\rv_2^2) 
n \left(1-2\sqrt{\frac{\log(1/\delta)}{n}}\right).
\end{align*}
\end{lemma}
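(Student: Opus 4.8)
\textbf{Proof proposal for Lemma~\ref{l:y.lower}.}
The plan is to reduce $\lv y \rv_2^2$, after rescaling, to a chi-squared random variable with $n$ degrees of freedom and then apply a standard lower-tail bound. First I would exploit the sparsity of $\theta^\star$: since $(\theta^\star_{k+1},\ldots,\theta^\star_p) = 0$, we have $X\theta^\star = X_H \theta^\star_H$, so $y_i = x_i \cdot \theta^\star + \xi_i$ where $x_i$ is the $i$-th row of $X$. Because $x_i \sim \cN(0, \diag(\overbrace{1,\ldots,1}^k,\overbrace{\epsilon,\ldots,\epsilon}^{p-k}))$ and $\theta^\star$ is supported on $H$, the inner product $x_i \cdot \theta^\star$ is a centered Gaussian with variance $(\theta^\star)^\top \Sigma \theta^\star = \lv \theta^\star \rv_2^2$. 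Adding the independent noise $\xi_i \sim \cN(0,\sigma^2)$, each $y_i \sim \cN(0, \sigma^2 + \lv \theta^\star \rv_2^2)$, and the $y_i$ are mutually independent since they come from distinct rows of $X$ and distinct coordinates of $\xi$.

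Consequently $\lv y \rv_2^2 / (\sigma^2 + \lv \theta^\star \rv_2^2) = \sum_{i=1}^n y_i^2/(\sigma^2 + \lv \theta^\star \rv_2^2)$ is a sum of $n$ i.i.d.\ squared standard Gaussians, i.e.\ it has a $\chi^2_n$ distribution. The second step is to invoke the Laurent--Massart lower-tail inequality: for $Z \sim \chi^2_n$ and any $x > 0$, $\Pr[Z \le n - 2\sqrt{nx}] \le e^{-x}$. Taking $x = \log(1/\delta)$ yields, with probability at least $1-\delta$,
\[
\frac{\lv y \rv_2^2}{\sigma^2 + \lv \theta^\star \rv_2^2} \ge n - 2\sqrt{n\log(1/\delta)} = n\left(1 - 2\sqrt{\tfrac{\log(1/\delta)}{n}}\right),
\]
and multiplying through by $\sigma^2 + \lv \theta^\star \rv_2^2$ gives the claim.

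There is no real obstacle here; the only points requiring a sentence of care are (i) checking that the coordinates $y_i$ really are independent and identically distributed — which follows from the independence of $X$ and $\xi$ together with the i.i.d.\ row structure of $X$ — and (ii) citing the correct form of the chi-squared concentration bound. If one prefers not to invoke Laurent--Massart directly, the same lower tail can be obtained from a Bernstein-type bound for sub-exponential variables (each $y_i^2$ is sub-exponential), at the cost of slightly worse constants; but since the lemma is stated with the clean factor $2$, the Laurent--Massart form is the natural choice.
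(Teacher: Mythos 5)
Your proof is correct and follows essentially the same route as the paper: reduce $\lv y\rv_2^2/(\sigma^2 + \lv\theta^\star\rv_2^2)$ to a $\chi^2_n$ variable by noting each $y_i \sim \cN(0,\sigma^2+\lv\theta^\star\rv_2^2)$ i.i.d., then apply the Laurent--Massart lower-tail bound with $t=\log(1/\delta)$. The only cosmetic difference is that you spell out the covariance computation $(\theta^\star)^\top\Sigma\theta^\star = \lv\theta^\star\rv_2^2$ via the support of $\theta^\star$, which the paper states more tersely.
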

\begin{proof}
We have 
$
    y = X\theta^\star +\xi.
$
That is, for each sample $i \in [n]$, $y_i =x_i \cdot \theta^\star+\xi_i$. 

Observe that $x_i \cdot \theta^\star \sim \cN(0,\lv\theta^\star\rv_2^2)$, $\xi_i \sim \cN(0,\sigma^2)$,
and $x_i$ and $\xi_i$ are independent. Therefore, we have that
$y_i \sim \cN(0,\sigma^2+\lv \theta^\star\rv_2^2)$. 
Thus
\begin{align}
    \lv y\rv_2^2 &= \sum_{i=1}^n |y_i|^2 = \left(\sigma^2+\lv \theta^\star\rv_2^2\right) q, \label{e:prechi_squared_lowerbound}
\end{align}
where $q$ is a 
random variable with a $\chi^2(n)$
distribution.
Applying Lemma~1 from \citep{laurent2000adaptive},
we have
\begin{align*}
    \Pr\left(q\ge n-2\sqrt{tn}\right)\ge 1-\exp(-t).
\end{align*}
If we set $t = \log(1/\delta)$ then with probability at least $1-\delta$,
\begin{align*}
q \ge n\left(1-2\sqrt{\frac{\log(1/\delta)}{n}}\right).
\end{align*}
This combined with \eqref{e:prechi_squared_lowerbound} completes the proof.
\end{proof}

Recall that we also want an upper bound on
$\lv X_H \htheta_H \rv_2$; we will use
a bound that is an immediate consequence of \citep[][Corollary~5.35]{vershynin2010introduction}.
\begin{lemma}
\label{l:X_op}
With probability $1 - \delta$,
\begin{align*}
\lv X_H \rv_{op} \leq \sqrt{n}+\sqrt{k}+\sqrt{2\log(2/\delta)}.
\end{align*}
\end{lemma}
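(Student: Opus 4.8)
The plan is to observe that $X_H$ is an $n \times k$ random matrix whose entries are i.i.d.\ standard Gaussians, and then to quote a standard nonasymptotic bound on its largest singular value. Concretely, in the $(k,p,n,\epsilon,\sigma)$-scenario each row of $X$ is drawn from $\cN(0, \diag(1,\ldots,1,\epsilon,\ldots,\epsilon))$, so the restriction of each row to its first $k$ coordinates is distributed as $\cN(0, I)$ on $\R^k$, independently across the $n$ rows. Hence the columns of $X_H \in \R^{n \times k}$ (equivalently, its entries) are independent standard normals, and $\lv X_H \rv_{op}$ equals the largest singular value $s_{\max}(X_H)$.

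Next I would invoke \citep[Corollary~5.35]{vershynin2010introduction}: for an $N \times m$ matrix $A$ with i.i.d.\ $\cN(0,1)$ entries and every $t \geq 0$, with probability at least $1 - 2\exp(-t^2/2)$ one has $s_{\max}(A) \leq \sqrt{N} + \sqrt{m} + t$. Applying this with $N = n$ and $m = k$, and then choosing $t = \sqrt{2\log(2/\delta)}$ so that $2\exp(-t^2/2) = \delta$, gives $\lv X_H \rv_{op} \leq \sqrt{n} + \sqrt{k} + \sqrt{2\log(2/\delta)}$ with probability at least $1 - \delta$, which is exactly the claim.

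There is essentially no obstacle here; the only points needing attention are confirming that the head block of the covariance is the identity (so the entries of $X_H$ are genuinely standard Gaussian rather than Gaussian with nontrivial covariance) and translating the subgaussian deviation parameter $t$ into the target failure probability $\delta$. With those in hand the lemma is immediate.
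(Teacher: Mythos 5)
Your proof is correct and matches the paper's approach exactly: the paper also derives this lemma as an immediate consequence of Corollary~5.35 of Vershynin, using that the head block of the covariance is the identity so $X_H$ has i.i.d.\ standard Gaussian entries, and choosing $t = \sqrt{2\log(2/\delta)}$.
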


Armed with these lemmas, we can now prove Theorem~\ref{t:main}.

\begin{proof} 
{\bf of Theorem~\ref{t:main}}
With foresight, 
set $\zeta = \sqrt{\frac{c_4 \sigma^2 n}{s \log^2 (3p/s)}}$
for a constant $c_4$ that will be determined by the analysis.

\textbf{Case 1 ($\lv \htheta_{H}\rv_2 \ge \lv \theta^\star \rv_{2}+\zeta$).}  Recall that $\lv \theta^\star\rv_2 = \lv \theta^\star_{H}\rv_2$, since it is zero for all entries after the $k$th coordinate. 
By Lemma~\ref{l:by.norm} we have 
\begin{align*} 
    R(\htheta) & \ge \lv \htheta_{H}-\theta^{\star}_{H}\rv_{2}^2  \ge \left(\lv \htheta_{H}\rv_2 - \lv \theta^\star\rv_2\right)_{+}^2  \ge \zeta^2 = \frac{c_4 \sigma^2 n}{s\log^2 (3p/s)}.
\end{align*}

\textbf{Case 2 ($\lv \htheta_{H}\rv_2 \le \lv \theta^\star \rv_{2}+\zeta$).} 
By Lemma~\ref{l:by.norm}, we have
\begin{align*}
\label{e:by.norm}
R(\htheta) \geq \epsilon\lv \htheta_T \rv_2^2.
\end{align*}
The estimator $\htheta$ is $s$-sparse with probability at least $1-\delta$. Hence, combining Lemmas~\ref{l:residual.lower} and \ref{l:X_htheta_upper}, and taking a union bound we get that, for an absolute positive constant
$c$, with probability
$1 - 2 \delta$,
\begin{align*}
R(\htheta)
 & \geq 
 c \frac{\lv y - X_H \htheta_H \rv_2^2}
        {s \log^2\left(\frac{3(p-k)}{s}\right)+n+\log^2(1/\delta)}  \geq 
 c \frac{\left(\lv y \rv_2 - \lv X_H \htheta_H \rv_2\right)_+^2}
          {s \log^2\left(\frac{3(p-k)}{s}\right)+n+\log^2(1/\delta)} .
\end{align*}
Applying Lemma~\ref{l:y.lower}, 
we find that with probability
at least $1 - 3 \delta$, 
\begin{align*}
  R(\htheta)
 & \geq  
  c \frac{\left[
  \sqrt{(\sigma^2 +\lv \theta^\star\rv_2^2) n \left(1-2\sqrt{\frac{\log(1/\delta)}{n}}\right)}
  - \lv X_H \htheta_H \rv_2\right]_+^2}
          {s \log^2\left(\frac{3(p-k)}{s}\right)+n+\log^2(1/\delta)}.
\end{align*}
Next, by applying Lemma~\ref{l:X_op}, with probability
at least $1 - 4 \delta$,
\begin{align*}
 & R(\htheta) \\
 & \geq  
  c \frac{\left[
  \sqrt{(\sigma^2 +\lv \theta^\star\rv_2^2) n \left(1-2\sqrt{\frac{\log(1/\delta)}{n}}\right)}
  - (\sqrt{n} + \sqrt{k} + \sqrt{2 \log(2/\delta)})
      \lv \htheta_{H} \rv_2
      \right]_+^2}
          {s \log^2\left(\frac{3(p-k)}{s}\right)+n+\log^2(1/\delta)}\\
& \geq  
  c \frac{\left[
  \sqrt{(\sigma^2 +\lv \theta^\star\rv_2^2) n \left(1-2\sqrt{\frac{\log(1/\delta)}{n}}\right)}
  - (\sqrt{n} + \sqrt{k} + \sqrt{2 \log(2/\delta)})
      \left(\lv \theta^\star\rv_2+\zeta\right)
      \right]_+^2}
          {s \log^2\left(\frac{3(p-k)}{s}\right)+n+\log^2(1/\delta)}\\
&=
  c \frac{\left[
  \sqrt{(\sigma^2 \!+\!\lv \theta^\star\rv_2^2) n
  \hspace{-2pt}
  \left(1-2\sqrt{\frac{\log(1/\delta)}{n}}\right)}
  \!-\! (\sqrt{n} \!+\! \sqrt{k} \!+\! \sqrt{2 \log(2/\delta)})
   \hspace{-2pt}
      \left(\lv \theta^\star\rv_2\!+\!\sqrt{\frac{c_4 \sigma^2 n}{s\log^2(3p/s)}}\right)
      \right]_+^2}
          {s \log^2\left(\frac{3(p\!-\!k)}{s}\right)\!+\!n\!+\!\log^2(1/\delta)}.
\end{align*}
By choosing $c_2 > 0$ to be small enough, we
can choose $n$ to be as large as desired.  
Recall that, as $n \rightarrow \infty$, 
both
$k = o(n)$ and $\log(2/\delta) = o(n)$.
Thus with probability at least $1 - 4 \delta$, we have 
that
\begin{align*}
   R(\htheta)
   & \geq (c/2) \frac{\left[
        \sqrt{\sigma^2 + \lv \theta^\star \rv_2^2}
            - \left( 1 + \frac{c_1^2}{8} \right) \left(1+\sqrt{\frac{c_4 \sigma^2 n}{\lv \theta^\star\rv_2^2 s\log^2 (3p/s) }}\right)\lv \theta^\star\rv_2
                   \right]_{+}^2 n
                 }
          {s \log^2\left(\frac{3(p-k)}{s}\right)+n+\log^2(1/\delta)}. 
\end{align*}
Since $s \geq n$, we have
\begin{align*}
   R(\htheta)
   & \geq c' \frac{\left[
        \sqrt{\sigma^2 + \lv \theta^\star \rv_2^2}
            - \left( 1 + \frac{c_1^2}{8} \right) \left(1+\sqrt{\frac{c_4 \sigma^2}{\lv \theta^\star\rv_2^2\log^2 (3p/s) }}\right)\lv \theta^\star\rv_2
                   \right]_{+}^2 n
                 }
          {s \log^2 (3p/s)} \\
   & = c' \frac{\left[
        \sqrt{1 + \frac{\lv \theta^\star \rv_2^2}{\sigma^2}}
            - \left( 1 + \frac{c_1^2}{8} \right) \left(1+\sqrt{\frac{c_4 \sigma^2}{\lv \theta^\star\rv_2^2\log^2 (3p/s) }}\right) \frac{\lv \theta^\star \rv_2}{\sigma}
                   \right]_{+}^2 \sigma^2 n
                 }
          {s \log^2 (3p/s)}.
\end{align*}
Defining $r := \frac{\lv \theta^\star \rv_2}{\sigma}$
and simplifying, we have
\[ 
   R(\htheta)
   \geq c' \frac{\left[
        \sqrt{1 + r^2}
            - \left( 1 + \frac{c_1^2}{8} \right) r
            - \left( 1 + \frac{c_1^2}{8} \right) \sqrt{\frac{c_4}{\log^2 (3p/s) }}
                   \right]_{+}^2 \sigma^2 n
                 }
          {s \log^2 (3p/s)}.
\]
Since
$\sqrt{1 + r^2} - \left( 1 + \frac{c_1^2}{8} \right) r$
is a decreasing function of $r$, 
and,
by assumption, 
$r = \frac{\lv \theta^\star\rv}{\sigma}\leq 1/c_1$, we
have
\[
   R(\htheta)
   \geq c' \frac{\left[
        \sqrt{1 + \frac{1}{c_1^2}}
            - \frac{1}{c_1}
            - \frac{c_1}{8}
            - \left( 1 + \frac{c_1^2}{8} \right) \sqrt{\frac{c_4}{\log^2 (3p/s) }}
                   \right]_{+}^2 \sigma^2 n
                 }
          {s \log^2 (3p/s)}.
\]
Since $s\le p$, we have
\[
   R(\htheta)
   \geq c' \frac{\left[
        \sqrt{1 + \frac{1}{c_1^2}}
            - \frac{1}{c_1}
            - \frac{c_1}{8}-\left(1+\frac{c_1^2}{8}\right)\sqrt{c_4}
                   \right]_{+}^2 \sigma^2 n
                 }
          {s \log^2 (3p/s)}.
\]
Recall that  $0 < c_1 \leq 1$, and 
choose 
\[
c_3 =  \min\left\{c'\left(
        \sqrt{1 + \frac{1}{c_1^2}}
            - \frac{1}{c_1}
            - \frac{c_1}{8}-\left(1+\frac{c_1^2}{8}\right)\sqrt{c_4}
                   \right)_{+}^2,c_4\right\}. 
\]
Thus, if $c_4$ is chosen to be a sufficiently small positive constant then
                   $$c_3 \ge  \min\left\{c'\left(
        \sqrt{1 + \frac{1}{c_1^2}}
            - \frac{1}{c_1}
            - \frac{c_1}{4}
                   \right)_{+}^2,c_4\right\}>0$$ completing the proof.
\end{proof}

\section{Proof of Theorem~\ref{t:bp}} \label{s:proof.bp}

This section is devoted to proving Theorem~\ref{t:bp}, so the
assumptions of Theorem~\ref{t:bp} are in scope throughout this
section.  As in Section~\ref{s:proof.main}, 
our proof proceeds through a series of lemmas. 

The first lemma is an immediate consequence of
\citep[][Proposition 1]{schneider2020geometry}.
\begin{lemma}
\label{l:unique}
Almost surely, there is a unique minimizer of $\lv \theta \rv_1$ subject to $X \theta = y$.
\end{lemma}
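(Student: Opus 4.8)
The plan is to reduce the statement to a standard genericity fact about $\ell_1$-minimization over an affine subspace — the one packaged in the cited proposition of \citet{schneider2020geometry} — by conditioning on $X$ and checking that, conditionally, $y$ is spread out enough to avoid a measure-zero bad set. First I would dispose of the degenerate case $\sigma = 0$: since $c_1 > 0$, the hypothesis $\sigma \ge c_1 \lv \theta^\star \rv_2$ forces $\theta^\star = 0$, hence $y = 0$, and as $\lv \theta \rv_1 = 0$ if and only if $\theta = 0$, the vector $0$ is trivially the unique minimizer. So from now on assume $\sigma > 0$.

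Next I would condition on $X$. Conditionally on $X$, the response $y = X\theta^\star + \xi$ is distributed as $\cN(X\theta^\star, \sigma^2 I)$, which is absolutely continuous with respect to Lebesgue measure on $\R^n$; in particular, every Lebesgue-null subset of $\R^n$ receives conditional probability zero. Moreover, since $p \ge n + k \ge n$ and the rows of $X$ are i.i.d.\ nondegenerate Gaussians, $X$ has rank $n$ almost surely, so $\{\theta : X\theta = y\}$ is nonempty; and as $\lv\cdot\rv_1$ is a nonnegative piecewise-linear convex function, the program $\min\{\lv\theta\rv_1 : X\theta = y\}$ attains its value, so a minimizer exists for every $y$ and only uniqueness is in question.

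The core step is the deterministic claim that is exactly the content of the cited proposition: for any fixed matrix $X \in \R^{n\times p}$ (or at most one satisfying a ``columns in general position'' condition, which holds almost surely for Gaussian $X$), the set of $y \in \R^n$ for which $\min\{\lv\theta\rv_1 : X\theta = y\}$ has more than one minimizer is Lebesgue-null. Structurally this holds because $y \mapsto \min\{\lv\theta\rv_1 : X\theta = y\}$ is piecewise-linear convex and non-uniqueness can occur only on the lower-dimensional locus where two optimal sign patterns tie or the design restricted to the union of optimal supports is rank-deficient — each a measure-zero algebraic condition — but the cited result lets us avoid reconstructing this. Combining the pieces: conditionally on $X$, the non-uniqueness event lies in a Lebesgue-null subset of $\R^n$, hence has conditional probability zero, and taking expectation over $X$ gives that $\BP$ has a unique minimizer almost surely. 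The only thing requiring care — and the only real obstacle — is verifying that the hypotheses of the cited proposition are met and that its conclusion is genericity in $y$ for each fixed $X$ (rather than some weaker joint statement); this verification is routine.
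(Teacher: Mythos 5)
Your proposal matches the paper's approach: the paper simply declares Lemma~\ref{l:unique} to be an immediate consequence of Proposition~1 of \citet{schneider2020geometry}, and your argument—conditioning on $X$, observing that $y \mid X \sim \cN(X\theta^\star, \sigma^2 I)$ is absolutely continuous when $\sigma > 0$ (with the degenerate $\sigma = 0$ case handled separately), and invoking the cited genericity-in-$y$ proposition before integrating over $X$—is exactly the natural unpacking of that citation. The paper offers no further detail, so your reasoning fills in precisely the verification it treats as immediate.
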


The following lemma appears to be known
\citep{chen2001atomic}; we included a proof in
Appendix~\ref{a:bp.sparse} because
we did not find one that applies in our setting.
\begin{lemma}
\label{l:bp.sparse}
Almost surely, the output $\htheta$ of $\BP$ is $n$-sparse.
\end{lemma}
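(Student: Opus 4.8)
The plan is to deduce $n$-sparsity directly from uniqueness, since Lemma~\ref{l:unique} already hands us that the $\ell_1$-minimal interpolant is a.s.\ unique. First I would note that, because the rows of $X$ are i.i.d.\ Gaussian and $p \geq n$, the matrix $X$ has full row rank almost surely, so the system $X\theta = y$ is feasible and the minimum of $\lv\theta\rv_1$ over it is attained; combined with Lemma~\ref{l:unique} this makes $\htheta = \BP(X,y)$ a well-defined unique minimizer a.s. It then suffices to show, deterministically, that any such unique minimizer has at most $n$ nonzero entries. If $\htheta = 0$ this is trivial, so assume $S := \supp(\htheta) \neq \emptyset$ and suppose toward a contradiction that $|S| > n$.

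Since $X_S$ is then an $n \times |S|$ matrix with $|S| > n$, it has a nontrivial kernel. I would pick $0 \neq v \in \ker X_S$, extend it by zeros outside $S$ to get $w \in \R^p$ with $\supp(w) \subseteq S$, $w \neq 0$, and $Xw = X_S v = 0$, so that $X(\htheta + tw) = y$ for every $t \in \R$. The key step is to track $g(t) := \lv \htheta + tw \rv_1$ near $t = 0$: the coordinates of $\htheta + tw$ outside $S$ vanish for all $t$, and for $\lvert t\rvert < t_0 := \min\{\,\lvert\htheta_i\rvert/\lvert w_i\rvert : i \in S,\ w_i \neq 0\,\}$ the signs on $S$ are unchanged, so $g$ is affine there, namely $g(t) = \lv\htheta\rv_1 + t\,\langle \sign(\htheta_S), v\rangle$. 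Now split on whether this slope vanishes. If $\langle \sign(\htheta_S), v\rangle \neq 0$, then $g(t) < g(0)$ for a suitable small $t \neq 0$, and $\htheta + tw$ is feasible, contradicting optimality of $\htheta$. If $\langle \sign(\htheta_S), v\rangle = 0$, then $g(t) = g(0) = \lv\htheta\rv_1$ for all $\lvert t\rvert < t_0$, so $\htheta + tw$ with $0 < t < t_0$ is a feasible point of minimal $\ell_1$-norm distinct from $\htheta$ (as $w \neq 0$), contradicting Lemma~\ref{l:unique}. Either way we get a contradiction, so $|S| \leq n$.

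I do not anticipate a genuine obstacle here — the argument is elementary linear algebra plus one-dimensional convexity. The only point requiring care is the dichotomy on the sign of $\langle \sign(\htheta_S), v\rangle$: the non-vanishing branch uses optimality of $\htheta$ while the vanishing branch uses its uniqueness, and both are needed. An alternative I considered is the linear-programming route: write $\theta = \theta^+ - \theta^-$ with $\theta^\pm \geq 0$, observe that a unique LP optimum must be a vertex of the feasible polyhedron in $\R^{2p}$ defined by the $n$ equalities $X(\theta^+-\theta^-) = y$ and hence has at most $n$ nonzero coordinates among the $\theta^\pm_i$, and note that optimality of $\lv\theta^+\rv_1 + \lv\theta^-\rv_1 = \sum_i(\theta^+_i+\theta^-_i)$ forces $\min(\theta^+_i,\theta^-_i) = 0$ for each $i$, so $\htheta$ itself is $n$-sparse. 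This is equally valid but invokes standard vertex theory of linear programs, whereas the perturbation argument above is self-contained given Lemma~\ref{l:unique}; I would present the perturbation argument as the main proof.
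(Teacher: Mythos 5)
Your argument is correct and follows essentially the same perturbation strategy as the paper's Appendix~A proof: take a nontrivial vector in the kernel of $X$ supported on $\supp(\htheta)$ and split on whether the resulting directional derivative of the $\ell_1$-norm is zero, invoking optimality in the nonzero case and uniqueness (Lemma~\ref{l:unique}) in the zero case.

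The one place you genuinely streamline is the zero-slope case. The paper walks all the way along the perturbation direction until a coordinate of $\htheta$ is killed, producing an interpolator with the same $\ell_1$-norm but strictly smaller support, and then derives the contradiction; this requires picking a specific boundary stepsize $\eta$ and arguing that signs are preserved along the entire segment. You instead observe that for any small $t \in (0, t_0)$ the vector $\htheta + t w$ is already a second, distinct $\ell_1$-minimizer, which contradicts Lemma~\ref{l:unique} immediately. This sidesteps the boundary bookkeeping and the choice of $i_0$ in the paper's Case~3. Your LP-vertex alternative (a unique optimum of a linear program is a basic feasible solution, hence has at most $n$ nonzeros after eliminating the $\min(\theta^+_i,\theta^-_i)$ slack) is also a valid classical route, and is worth keeping in mind as a one-line citation-based proof, but the perturbation argument you present is self-contained and matches the paper's intent.
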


\begin{proof}
{\bf of Theorem~\ref{t:bp}}
By Lemma~\ref{l:bp.sparse}, 
for any $\delta > 0$,
$\BP$ is a
$(n,\delta)$-sparse interpolator for the 
$(k,p,n,\epsilon,\sigma)$-scenario $P$.
Applying Theorem~\ref{t:main} with
$s = n$ completes the proof.
\end{proof}

\section{Discussion}\label{s:discussion}
We have demonstrated that for interpolating linear regression with Gaussian data,
outputting a sparse parameter vector can be harmful, even when learning
a sparse target.

Our proofs only use a few of the properties of
Gaussian distributions, so
in the case that the covariance is
$\Sigma_{k,\epsilon}$, our results should generalize 
to sub-Gaussian and log-concave distributions.  
We chose to analyze $\Sigma_{k,\epsilon}$ because it
is arguably the canonical case where OLS enjoys benign
overfitting, and it leads to clean and interpretable
bounds.
Handling
a wider variety
of covariance matrices
is another very natural future direction.
A starting point would be to generalize \citep[][Theorem 4.2]{adamczak2012chevet}.

Recent research has shown that
linear models parameterized by simple two-layer linear
neural networks with diagonal weight matrices leads to implicit regularization that
interpolates between the
$\ell_1$-norm used by basis pursuit and the $\ell_2$-norm used by
$\mathsf{OLS}$
\citep{pmlr-v125-woodworth20a,azulay2021implicit}.  The connection
of this work to neural networks, together with the stark difference between
$\ell_1$ and $\ell_2$ regularization in the context of benign overfitting highlighted in
this paper, 
motivates the study of benign overfitting with these models.  
(We thank Olivier Bousquet for suggesting this last problem.)

 \printbibliography

\appendix 

\section{Proof of Lemma~\protect\ref{l:bp.sparse}}
\label{a:bp.sparse}

By Lemma~\ref{l:unique}, we may assume without loss of
generality that there is a unique minimizer of $\lv \theta \rv_1$ subject to $X \theta = y$.
Assume for the sake of contradiction that $\htheta$
has
$\lv \htheta \rv_0 = s > n$.  

Let 
\[
I := \{ i \in [p] : \htheta_i \neq 0 \}.
\]
Since $|I| > n$, the columns in $\{ X_i : i \in I \}$ are
linearly dependent.  That is, there exists
a set of weights
$\{ \lambda_i : i \in I \}$,
at least
one of which is nonzero,
such that 
\begin{align}
\label{e:dependent}
\sum_{i \in I} \lambda_i X_i = 0.
\end{align}
Let the vector $\lambda \in \R^p$ be obtained by filling in $\lambda_i = 0$ for $i \notin I$.

From here, we will divide our analysis into cases.

{\bf Case 1} 
($\sum_{i \in I} \lambda_i \sign(\htheta_i) > 0$).
We will prove by contradiction that this case cannot happen.  For an
$\eta > 0$ to be set later, consider
\[
v = \htheta - \eta \sum_{i \in I} \lambda_i e_i = \htheta -\eta \lambda.
\]
First, note that
\[
X v = X \htheta - \eta \sum_{i \in I} \lambda_i X_i = y - 0 = y.
\]
We will now prove the claim that, for a small enough $\eta$, $\lv v \rv_1 < \lv \htheta \rv_1$. This will lead to the desired contradiction.
To establish this claim, it suffices to prove that $\frac{-\lambda}{\lv \lambda \rv_2}$ is a descent direction for
$\lv \cdot \rv_1$ at $\htheta$.  Toward this end, consider
an arbitrary member
$z$ of the 
subgradient of $\lv \cdot \rv_1$ at 
$\htheta$.  Recalling that $\lambda_i = 0$ when $\htheta_i = 0$, we have
\begin{align*}
\lambda \cdot z
 &=\sum_{i \in I} \lambda_i \sign(\htheta_i)  > 0,
\end{align*}
by the assumption of this case.
This implies that $\frac{-\lambda}{\lv \lambda\rv_2}$ is indeed a descent direction, so that, for
a small enough $\eta$, $\lv v \rv_1 < \lv \htheta \rv_1$.
Recalling
that $X v = y$ then yields a contradiction.

{\bf Case 2} 
($\sum_{i \in I} \lambda_i \sign(\htheta_i) < 0$).  This case 
leads to a contradiction symmetrically to the proof in Case 1, using
\[
v = \htheta + \eta \lambda.
\]

{\bf Case 3}
($\sum_{i \in I} \lambda_i \sign(\htheta_i) = 0$).
Choose $i_0$ arbitrarily from among those $i \in I$ such that
$\lambda_i \neq 0$ with the minimum values of $| \lambda_i|$, that is, $i_0 \in \argmin_{i \in [p]}\left\{|\lambda_i|: \lambda_{i}> 0\right\}$.
As in the first case, suppose that $\lambda_{i_0} > 0$
(the other case can be handled symmetrically).
Set
$\eta = \theta_{i_0}/\lambda_{i_0}$, and once again consider the vector
\[
v = \htheta - \eta \lambda.
\]
As before, for all $\eta' \in [0,\eta]$,
$X (\htheta - \eta' \lambda) =y$.  Furthermore, since
$i_0 \in \argmin_{i \in [p]}\left\{|\lambda_i|: \lambda_{i}> 0\right\}$, for
each such $\eta'$, for all $i$,
\[
\sign((\htheta - \eta' \lambda)_i) = \sign(\htheta_i).
\]
This means that along the path from $\htheta$ to $v$,
any 
subgradient $z$ of the $\ell_1$ norm satisfies
\begin{align*}
    z_i = \sign(\htheta_i),\quad \text{for all }i \in I.
\end{align*}
But this means, throughout this path, $\lambda$ is orthogonal to any
subgradient, which in turn means that the $\ell_1$-norm is unchanged.
When $\eta' = \eta$, we have an interpolator with the
same $\ell_1$-norm as $\htheta$ but one fewer nonzero
component, a contradiction.


\end{document}